\newif\ifarXiv
\theoremstyle{plain}
\newtheorem{theorem}{Theorem}[section]
\theoremstyle{definition}
\newtheorem{definition}[theorem]{Definition}
\theoremstyle{remark}
\newtheorem{example}[theorem]{Example}
\icmltitlerunning{Metric Space Magnitude and Generalisation in Neural Networks}
\begin{document}

\twocolumn[
\icmltitle{Metric Space Magnitude and Generalisation in Neural Networks\\
            }

\icmlsetsymbol{equal}{*}
\icmlsetsymbol{equal_last}{$\dagger$}

\begin{icmlauthorlist}
\icmlauthor{Rayna Andreeva}{yyy}
\icmlauthor{Katharina Limbeck}{Helmholtz,TUM}
\icmlauthor{Bastian Rieck}{equal_last,Helmholtz,TUM}
\icmlauthor{Rik Sarkar}{equal_last,yyy}
\end{icmlauthorlist}

\icmlaffiliation{yyy}{School of Informatics, University of Edinburgh}
\icmlaffiliation{Helmholtz}{Helmholtz Munich}
\icmlaffiliation{TUM}{Technical University of Munich}

\icmlcorrespondingauthor{Rayna Andreeva}{r.andreeva@sms.ed.ac.uk}

\vskip 0.3in
]

\printAffiliationsAndNotice{}

\begin{abstract}
Deep learning models have seen significant successes in numerous applications, but their inner workings remain elusive. The purpose of this work is to quantify the learning process of deep neural networks through the lens of a novel topological invariant called  \emph{magnitude}. Magnitude is an isometry invariant; its properties are an active area of research as it encodes many known invariants of a metric space. We use magnitude to study the internal representations of neural networks and propose a new method for determining their generalisation capabilities. Moreover, we theoretically connect magnitude dimension and the generalisation error, and demonstrate experimentally that the proposed framework can be a good indicator of the latter.
\end{abstract}

\section{Introduction} 

\begin{figure*}
    \centering
    \includegraphics[width=\textwidth, trim={0 1.3cm 0 0}]{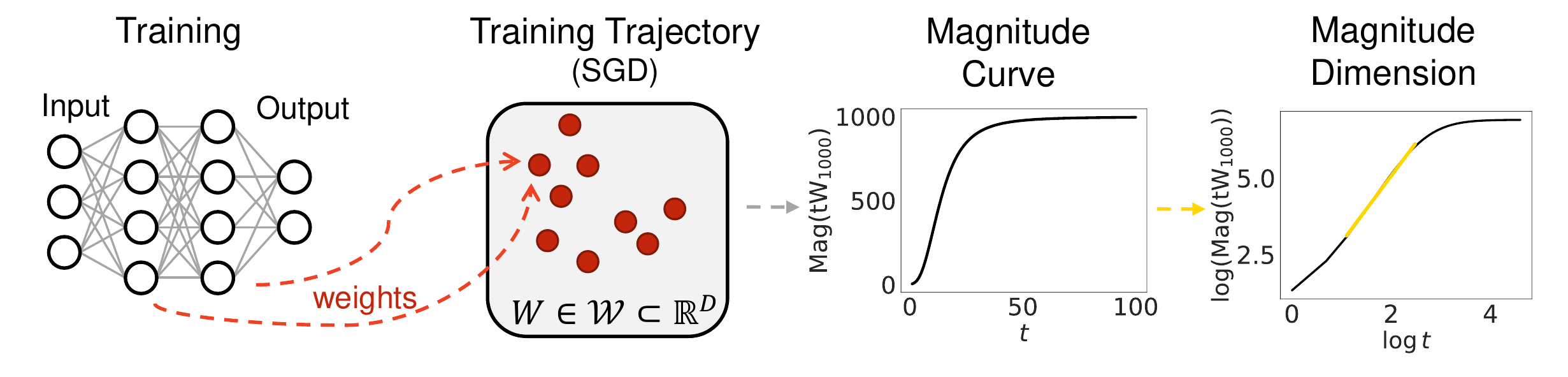}
    \caption{\textbf{Overview of the procedure.} We first train a neural network and monitor the training trajectory. At each 1000 iterations of the trajectory, we collect the training weights $W \in \mathbb{R}^d$ into a point cloud. Then, we compute the magnitude curve of the point cloud at selected scales $t$, create the log-log plot and estimate the magnitude dimension based on it.}
    \label{fig:overview}
\end{figure*}

Deep neural networks (DNNs) have become ubiquitous due to their remarkable performance in a range of tasks, including computer vision \cite{xie2017aggregated}, natural language processing \cite{vaswani2017attention}, and scientific discovery~\cite{jumper2021highly}. However, state-of-the-art DNNs often comprise millions to billions of parameters, making it impractical for human users to gain a precise understanding of the inner workings of these networks. One crucial yet unanswered question is to understand the generalisation of neural networks, i.e. their ability to perform well on unseen data. In recent literature, various notions of neural networks' intrinsic dimensions have been proposed \cite{simsekli2020hausdorff,birdal2021intrinsic,dupuis2023generalization}, which demonstrate that the most important ingredient for generalisation is effective capacity rather than the number of parameters.

In this work, we propose a novel measure of the intrinsic dimension of neural networks, based on a novel topological invariant called magnitude. Unlike previous approaches, magnitude benefits from a simple interpretation, and potentially better computational complexity. This choice is also theoretically justified. Magnitude is an isometry invariant of a metric space and its properties are an active area of mathematical research due to the fact that it encodes many important invariants from geometric measure theory and integral geometry \cite{leinster2013magnitude}. Further, magnitude has roots in theoretical ecology and it can capture the effective number of species in an ecosystem. In a broader context, it has been shown that magnitude can thus encode the effective number of distinct points in a space \cite{leinster2013magnitude}. We further build on this idea and propose a novel method for evaluating the generalisation error using the concept of an effective number of models, which acts as an effective capacity. 

Although there has been significant theoretical research on magnitude, a considerable number of its characteristics are yet to be discovered, and several unanswered questions persist, particularly regarding its practical applications in machine learning. While recent studies \citep{bunch2021weighting, adamer2021magnitude} have started to establish links between magnitude vectors and machine learning applications, this area of research is still in its very early stages.

Recently, it has been shown that a concept derived from magnitude, known as the magnitude dimension, is the same as the Minkowski dimension \cite{meckes2015magnitude} under certain conditions. As a result, we can theoretically show that the generalisation error of the trajectories of a training algorithm is intrinsically linked to the magnitude dimension of the so-defined metric space. This enables us to generate novel insights about the learning process of neural networks. Further, our contribution is the first work exploring the magnitude dimension, which has solid theoretical foundations, as a notion of the intrinsic dimension of neural networks.

\paragraph{Contributions.}
Our work is the first to introduce the mathematical concept of magnitude into theoretical deep learning and the study of neural network generalisation;
we believe that this is an exciting novel contribution to the nascent field of topological machine learning~\citep{Hajij22, hensel2021survey}. After establishing a theoretical bound, we explore the change in the magnitude function across multiple experimental settings, and compare the value of magnitude at multiple scales to the test accuracy. Further, we demonstrate experimentally that there is a link between magnitude and the test accuracy. By making a novel connection with the persistent homology dimension, we then compare our proposed measure with the intrinsic dimension introduced by \citet{adams2020fractal} and demonstrate that ours benefits from a better computational complexity and interpretability. 
%
%
In short, our contributions are as follows:
\begin{compactitem}
    \item We propose a novel method for evaluating the generalisation of neural networks based on magnitude and the effective number of models, which allows us to monitor  performance without a validation set.
    \item We prove a new upper bound for the generalisation error, linking the generalisation error to a magnitude-based characteristic of the training trajectories.
    \item We empirically show that the evolution of these measures throughout the training process correlates with the accuracy of the test set.
    \item We prove that all notions of previously proposed intrinsic dimensions are the same as the magnitude dimension, and we verify this result empirically.
\end{compactitem}

\section{Related Work}

We briefly review the literature related to generalisation in neural networks, intrinsic dimension, and magnitude.

\paragraph{Generalisation Bounds and Intrinsic Dimension.} 
Several works explore intrinsic dimension for capturing the generalisation capabilities of neural networks.
\citet{simsekli2020hausdorff} demonstrated that the fractal dimension of a hypothesis class is associated with the generalisation error, which is further linked to the heavy-tailed behavior of the trajectory of networks \cite{simsekli2019tail,hodgkinson2021multiplicative,mahoney2019traditional}. However, many assumptions were required for the bound to be computed in practice. A more recent work relaxed some of the assumptions, and developed the notion of the persistent homology dimension \cite{adams2020fractal}, $\mathrm{dim_{PH}}$.
The authors in \citet{birdal2021intrinsic} were the first to offer a theoretical justification for using topological invariants for the analysis of deep neural networks. Another work \cite{magai2022topology} investigated $\mathrm{dim_{PH}}$ at different depths and layers of the network and observed its evolution. In \citet{dupuis2023generalization}, the authors developed a data-driven dimension and compared it with the $\mathrm{dim_{PH}}$ of \citet{birdal2021intrinsic}. They have demonstrated stronger correlation with the generalisation error than previously shown and managed to relax some of the restrictive assumptions.

\paragraph{Magnitude and its Applications in Machine Learning.} 
Magnitude was first proposed in \citet{solow1994measuring} for  measuring biodiversity, albeit without any reference to its mathematical properties. It was only approximately twenty years later when \citet{leinster2013magnitude} formalised its mathematical properties using the language of category theory. Further, magnitude has been realised as the Euler characteristic in magnitude homology \cite{leinster2021magnitude}.
%
While magnitude has theoretical foundations, its applications to machine learning are scarce. Recently, there has been renewed interest in introducing magnitude into the machine learning community. The first work to develop the concept of magnitude in the context of machine learning  demonstrated that the individual summands of magnitude, known as magnitude weights, can be used as an efficient boundary detector \cite{bunch2021weighting}. Further, it has been used for working in the space of images and it has demonstrated its usefulness as an effective edge detector \cite{adamer2021magnitude}. However, our contribution constitutes the first direct application of magnitude to deep learning.

\paragraph{Using Topology to Characterise Neural Networks.}
Earlier research has established a connection between neural network training and topological invariants~\citep{fernandez2021determining}, using topological complexity as proxy for generalisation performance, for instance~\citep{Rieck19a}. However, these studies focused solely on analyzing the trained network after completing the training process \cite{fernandez2021determining}, potentially missing critical aspects of the training dynamics \cite{birdal2021intrinsic}. 
By contrast, we propose the use of another topological invariant---magnitude---which affords more interpretability than previous approaches. Moreover, we compute magnitude on the training trajectories instead of on the trained network, offering crucial topological insights into training dynamics.

\section{Background}

\begin{figure}[tbp]
    \centering
    \centerline{\includegraphics[width=0.5\textwidth]{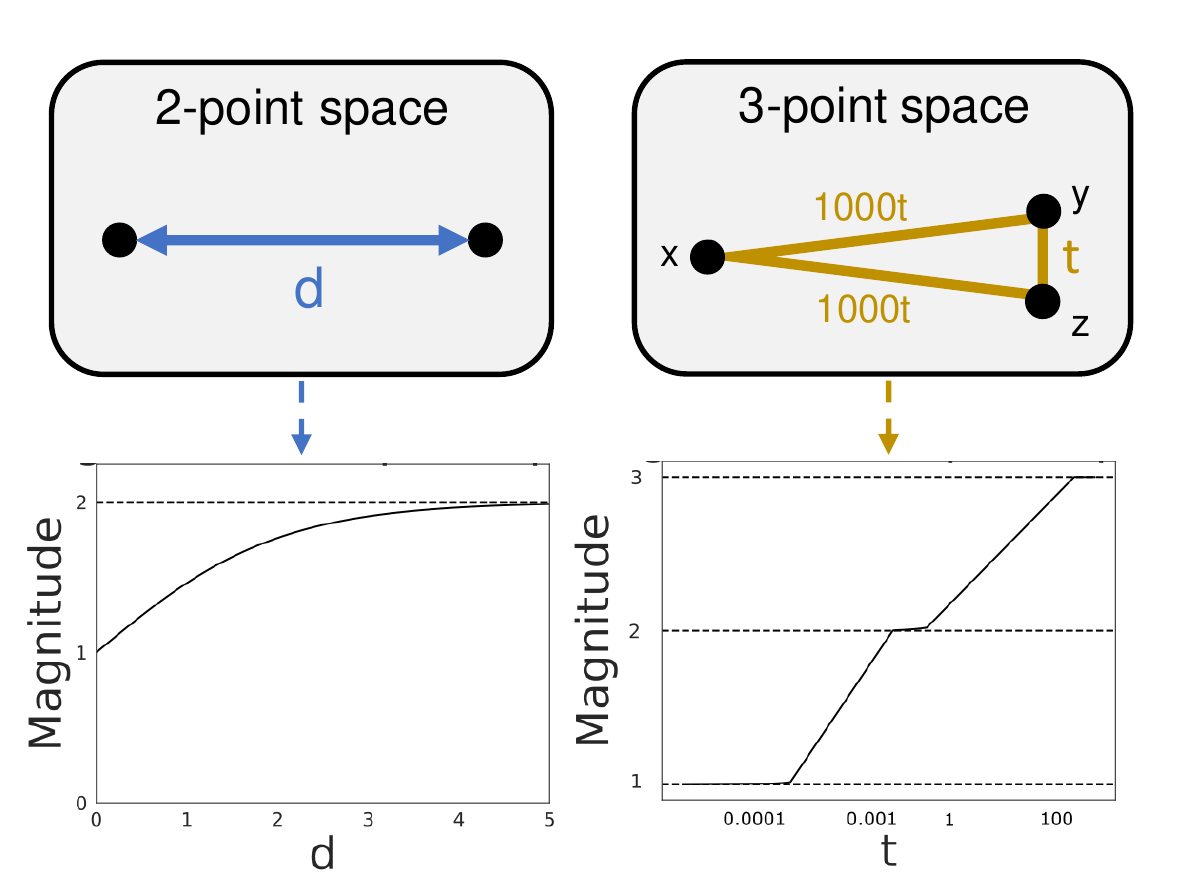}}

    \caption{ \textbf{Magnitude of the 2- and 3-point space.} On the left, we see the 2-point space, where the distance between the points is $d$. On the right we see the 3-point space for an isosceles triangle with distance $t$ between $y$ and $z$, and $1000t$ between $x$ and $y$ and $x$ and $z$. Below each space, we see the respective magnitude function.}
    \label{fig:magnitude_function_visualisation}
\end{figure}

We start by defining magnitude and the magnitude dimension, and then proceed to define intrinsic dimensions.

\subsection{Magnitude}

While the magnitude of metric spaces is a general concept~\cite{leinster2013magnitude}, we restrict our focus to subsets of $\mathbb{R}^n$, where magnitude is known to exist~\cite{meckes2013positive}.
\begin{definition}
    Let $(X,d)$ be a finite metric space with distance metric $d$. Then, the similarity matrix of $X$ is defined as $\zeta_{ij} = e^{-d_{ij}}$ for $1 \leq i,j \leq n$, where $n$ denotes the cardinality of $X$.
\end{definition}

\begin{definition}
Let $X$ be a metric space with similarity matrix $\zeta_{ij}$. 
If $\zeta_{ij}$ is invertible, magnitude is defined as
\begin{equation}
\mathrm{Mag}(X) = \sum_{ij}(\zeta^{-1})_{ij}.
\end{equation}

\label{magnitude}
\end{definition}

When $X$ is a finite subset of $\mathbb{R}^n$, then $\zeta_{ij}$ is a symmetric positive definite matrix as proven in \citet{leinster2013magnitude} (Theorem 2.5.3). Then, $(\zeta^{-1})_{ij}$ exists, and hence magnitude exists as well.
Magnitude is best illustrated when considering a few sample spaces with a small number of points.
\begin{example}
    Let $X$ denote the metric space with a single point $a$. Then, $\zeta_X$ is a $1 \times 1$ matrix with $\zeta_X^{-1} = 1$ and using the formula for magnitude, we get $\mathrm{Mag}_X = 1$.
\end{example}

\begin{example}
A more illustrative example is given by the space of two points. Let $X=\{a,b\}$ be a finite metric space where $d_X(a,b)=d$. Then
\begin{equation}
   \zeta_X=\begin{bmatrix} 1 & e^{-d}\\ e^{-d} & 1\end{bmatrix},  
\end{equation}

so that
\begin{equation}
    \zeta_X^{-1} = \frac{1}{1 - e^{-2d}} \begin{bmatrix} 1 & -e^{-d}\\ -e^{-d} & 1\end{bmatrix},
\end{equation}
and therefore 
\begin{equation}
    \mathrm{Mag}(X)=\dfrac{2 - 2e^{-d}}{1 - e^{-2d}}=\dfrac{2}{1+e^{-d}}. \label{eq:1}
\end{equation}
\end{example}

This example is also illustrated in Figure \ref{fig:magnitude_function_visualisation}. Using Eq.~\eqref{eq:1}, if $d$ is very small, i.e. $d \to 0$, $\mathrm{Mag}(X) \to 1$. Similarly, when $d \to \infty$, $\mathrm{Mag}(X) \to 2$. In practice, as it can be seen from Figure \ref{fig:magnitude_function_visualisation}, $\mathrm{Mag}(X) = 2$ for a value of $d$ as small as $5$.

More information about a metric space can be obtained by looking at its rescaled counterparts. The resulting representation is richer, and it can be summarised compactly in the magnitude function. For this purpose, the scale parameter $t$ is introduced. By varying $t$, we obtain a scaled version of the metric space, which permits us to answer what the number of \emph{effective points} of a metric space is.

\paragraph{The Magnitude Function.}
Magnitude assigns to each metric space not just a scalar number, but a function. This works as follows: we introduce a scale parameter $t$, and for each value of $t$, we consider the space where the distances between points are scaled by $t$. More concretely, for $t>0$, the notation $tX$ means $X$ scaled by a factor of $t$. Computing the magnitude for each value of $t$ then gives the magnitude function.
More formally, we have the following definition:
\begin{definition}
Let $(X,d)$ be a finite metric space. We define $(tX, d_t)$ to be the metric space with the same points as $X$ and the metric $d_t(x,y) = td(x,y)$.
\end{definition}
The intuition here is that we are looking at the same space but through different scales, for example, changing the distances from metres to centimeters. Then the definition of the magnitude function follows naturally.
\begin{definition}
The magnitude function of a finite metric space $(X,d)$ is the partially-defined function $t \rightarrow \mathrm{Mag}(tX)$, which is defined for all $t \in (0, \infty)$.
\end{definition}

\begin{example}
    Consider the magnitude function of the 3-point space in Figure \ref{fig:magnitude_function_visualisation}. In this example, the points $x$, $y$ and $z$ form an isosceles triangle. When $t=0.0001$, all the three points are very close to each other and almost indistinguishable. Hence, we say that the space has 1 effective point. In contrast, when $t=0.01$, the distance between the two points on the right $y$ and $z$ is very small, and $x$ is quite far. Therefore, we say that the space looks like two points. Finally, when $t$ is large, all the three points are far away from each other, and the value of magnitude is 3, which is also the cardinality of the space.
\end{example}

\subsection{Intrinsic Dimension}

There are various notions that can be used to measure the intrinsic dimension of a space. In this work, we will focus on three such notions: the upper-box dimension (Minkowski), the magnitude dimension and the persistent homology dimension. The box dimension is based on
covering numbers and can be linked to generalization via \citet{simsekli2020hausdorff}, whereas the magnitude dimension is built upon the concepts we defined earlier.

\begin{definition}[Minkowski dimension]

For a bounded metric space $X$, let $N_{\delta}(X)$ denote the maximal number of disjoint closed $\delta$-balls with centers in $X$. The upper box/Minkowski dimension is defined as

\begin{equation}
\mathrm{dim}_{\mathrm{Mink}}X = \lim_{\delta \to 0} \sup \frac{\log(N_{\delta}(X))}{\log(\frac{1}{\delta})}.    
\end{equation}

\label{minkowski}
\end{definition}

There is a subtle point to be made here. In general, the Minkowski and Hausdorff dimensions do not coincide and are not equivalent. However, in \citet{simsekli2020hausdorff} the authors provide conditions under which the Hausdorff dimension of the space we are interested in coincides with the
Minkowski dimension. In fact, for many fractal-like sets, these two notions of dimensions are equal to each other; see \citet[Chapter~5]{mattila1999geometry}.

\begin{definition}[Magnitude dimension]
When
\begin{equation}
\label{mag_dimension}
\mathrm{dim_{Mag}}X = \lim_{t \to \infty}\frac{\log(\mathrm{Mag}(tX)))}{\log t}   
\end{equation}
exists, we define this to be the magnitude dimension of $X$~\citep{meckes2015magnitude}.
\label{magnitude_dimension}
\end{definition}

The magnitude dimension can be approximately interpreted as the rate of change of the magnitude function for a suitable interval of values for $t$.
We can introduce another notion of the fractal dimension, known as the persistent homology dimension ($\mathrm{dim_{PH}}$) \cite{adams2020fractal}.

\begin{definition}
The persistent homology dimension of a bounded metric space $(X,d)$, denoted by $\mathrm{dim_{PH}}$, is defined as
\begin{equation}
\inf\{\alpha > 0, \exists C > 0, \forall W \subset X \mathrm{finite}, E_{\alpha} < C\},
\end{equation}
where $E_{\alpha}$ is the $\alpha$-lifetime sum, defined as $$E_{\alpha}(W) \coloneqq \sum_{(b,d) \in PH^{0}(Rips(W))}(d-b)^{\alpha}, $$ and $b$ and $d$ are the birth and death values respectively for the persistent homology of degree 0 ($PH^{0}$). It measures all connected components in the Vietoris-Rips filtration of W, denoted by $Rips(W)$.
\label{persistent_homology_dimension}    
\end{definition}
The definition is rather technical and it is not crucial for the work here, for more details please refer to \citet{adams2020fractal,memoli2019primer,edelsbrunner2022computational}.

\section{Theoretical Results}

We first elucidate connections between different notions of intrinsic dimension before proving connections to the generalisation error.

\subsection{Connection between Notions of Intrinsic Dimension}

After we have introduced three different notions of intrinsic dimensions, we demonstrate that they are in fact the same under some mild assumptions. Our novel contributions is proving that $\mathrm{dim_{Mag}}X$ and $\mathrm{dim_{PH}^{0}}X$. The result is formalised in the following theorem, which assumes that all notions of dimension exist.

\begin{theorem}
Let $X \subset \mathbb{R}^n$ be a compact set and either $\mathrm{dim_{Mag}}X$ or $\mathrm{dim_{Mink}}X$ exist. Then 
\begin{equation}
    \mathrm{dim_{Mag}}X = \mathrm{dim_{PH}^{0}}X
\end{equation}
\label{magnitude_equals_ph}
\end{theorem}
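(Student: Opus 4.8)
The plan is to prove the identity by routing through the Minkowski (upper box) dimension, which serves as a common reference point: I would establish $\mathrm{dim_{Mag}}X = \mathrm{dim_{Mink}}X$ on one side and $\mathrm{dim_{PH}^{0}}X = \mathrm{dim_{Mink}}X$ on the other, and then chain the two equalities. Both ``legs'' are essentially available in the literature, so the real work is in checking that the hypotheses of the cited results are met by a compact $X \subset \mathbb{R}^n$ and that the existence assumptions propagate correctly.

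For the first leg, I would invoke Meckes' theorem \citep{meckes2015magnitude}: for a compact metric space of negative type --- a class that contains every compact subset of $\mathbb{R}^n$ by \citet{meckes2013positive} --- the magnitude dimension coincides with the Minkowski dimension. Here one has to be slightly careful about which object $\mathrm{Mag}(tX)$ denotes when $X$ is infinite: the relevant notion is the magnitude of the compact set (the supremum of magnitudes of finite subsets, equivalently the measure-theoretic definition), and its large-$t$ growth rate agrees with that of the maximum diversity $D(tX)$, which is the quantity Meckes controls directly. The upshot is that whenever one of the limits defining $\mathrm{dim_{Mag}}X$ or $\mathrm{dim_{Mink}}X$ exists, so does the other and they are equal; this is exactly where the hypothesis ``either $\mathrm{dim_{Mag}}X$ or $\mathrm{dim_{Mink}}X$ exists'' is used, and it licenses replacing $\mathrm{dim_{Mag}}X$ by $\mathrm{dim_{Mink}}X$ for the rest of the argument.

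For the second leg, I would use the fact that the degree-$0$ persistent homology of the Vietoris--Rips filtration of a finite $W$ encodes precisely the edge lengths of a Euclidean minimum spanning tree of $W$: each finite bar $(b,d) \in PH^{0}(\mathrm{Rips}(W))$ has $b = 0$ and $d$ equal to an MST edge length, so $E_{\alpha}(W)$ is, up to the single infinite bar, the $\alpha$-th power-sum of the MST edge lengths. The critical exponent of this power-sum --- the infimum over $\alpha$ for which it stays bounded over all finite $W \subset X$ --- equals the upper box dimension of $X$ by the classical result of Kozma, Lotker and Stupp, which is the content of the $\mathrm{dim_{PH}^{0}}$-equals-box-dimension statement used by \citet{adams2020fractal}. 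Hence $\mathrm{dim_{PH}^{0}}X = \mathrm{dim_{Mink}}X$ for bounded $X \subset \mathbb{R}^n$ (the upper box dimension, being a $\limsup$, always exists), and combining with the first leg yields $\mathrm{dim_{Mag}}X = \mathrm{dim_{Mink}}X = \mathrm{dim_{PH}^{0}}X$.

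The main obstacle I anticipate is not a single calculation but reconciling the precise forms of the cited theorems with the definitions stated here: confirming that ``negative type and compact'' genuinely covers all compact $X \subset \mathbb{R}^n$, that the magnitude-function definition of $\mathrm{dim_{Mag}}$ as a limit matches the maximum-diversity formulation through which Meckes proves things, and that no extra regularity on $X$ (such as Ahlfors regularity or positive reach) is secretly required for the $\mathrm{dim_{PH}^{0}}$ identity --- the Kozma--Lotker--Stupp argument is robust for bounded Euclidean sets, but one should verify that the $\limsup$ and $\inf$ conventions line up so that existence of one dimension transfers to the others. Once these bookkeeping points are pinned down, the theorem is immediate from the two equalities.
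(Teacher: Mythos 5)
Your proposal follows essentially the same route as the paper's proof: the first leg is Meckes' Corollary~7.4 giving $\mathrm{dim_{Mag}}X = \mathrm{dim_{Mink}}X$ for compact $X \subset \mathbb{R}^n$ when either exists, and the second leg is the Kozma--Lotker--Stupp/Schweinhart identity $\mathrm{dim_{PH}^{0}}X = \mathrm{dim_{Mink}}X$ for bounded Euclidean sets, chained together exactly as in the paper. Your additional care about negative type, the maximum-diversity formulation, and the MST interpretation of $PH^{0}$ is sound bookkeeping that the paper leaves implicit, but it does not change the argument.
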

\begin{proof}
Since $X$ is compact and either $\mathrm{dim_{Mag}}X$ or $\mathrm{dim_{Mink}}X$ exist, from Corollary 7.4, \cite{meckes2015magnitude} it follows that both $\mathrm{dim_{Mag}}X$ and $\mathrm{dim_{Mink}}X$ exist and $\mathrm{dim_{Mag}}X = \mathrm{dim_{Mink}}X$. Since $X$ is compact, from the Heine-Borel Theorem, $X$ is both closed and bounded, and therefore from a result in \citet{kozma2006minimal, schweinhart2021persistent}, we have that $\mathrm{dim_{Mink}}X = \mathrm{dim_{PH}^{0}}X$. Hence, $\mathrm{dim_{Mag}}X = \mathrm{dim_{Mink}}X = \mathrm{dim_{PH}^{0}}X$, which implies that $\mathrm{dim_{Mag}}X = \mathrm{dim_{PH}^{0}}X$.

\end{proof}

\subsection{Connection to the Generalisation Error}

After having established equality between the various notions of dimensions, we proceed to formalise the required language of machine learning theory, culminating in a novel generalisation result.

For the beginning of this section, we follow the notation from \citet{shalev2014understanding}. We briefly recall some standard definitions to make our paper self-contained.
In a standard statistical learning setting, $\mathcal{X}$ denotes the set of features and $\mathcal{Y}$ the set of labels. Together, the cross product $\mathcal{X} \times \mathcal{Y}$ represents the space of data $\mathcal{Z}$. The learner has access to a sequence of data of $m$ samples, called the training data, denoted by $S = ((x_1, y_1), ..., (x_m,y_m))$ in $\mathcal{X} \times \mathcal{Y} = \mathcal{Z}$. We assume that the training set $S$ is generated by some unknown probability distribution over $\mathcal{X}$, which we denote by $\mathcal{D}$, and that each of the samples $\{x_i, y_i\}$ are independent and identically distributed (i.i.d) samples from $\mathcal{D}$.
We will focus on a restricted search space for finding a set of predictors, which we call a hypothesis class, $\mathcal{H}$. Each element, called a hypothesis, $h \in \mathcal{H}$, is a function from $\mathcal{X}$ to $\mathcal{Y}$. An optimal hypothesis, or parameter vector $h \in \mathcal{H}$ is selected by computing a quantity called a loss function, defined by $\ell :\mathcal{H} \times \mathcal{Z} \rightarrow \mathbb{R}_{+}$. The empirical error is then $L_{S}(h) = \frac{1}{m}\sum_{i=1}^{m}\ell (h,z)$ for $z \in \mathcal{Z}$ and the true error or risk is $L_{\mathcal{D}}(h) = \mathbb{E}_{z}[\ell(h,z)]$ for $z \in \mathcal{Z}$. The generalisation error is defined as the difference between the true error and the empirical risk, or $|L_{S}(h) - L_{\mathcal{D}}(h)|$.

In the case of neural networks, the hypothesis class is $\mathcal{W} \subset \mathbb{R}^d$, and each $w \in \mathcal{W} $ is a parameter vector. Given a training algorithm $\mathcal{A}$, we want to study the set of all hypothesis classes returned from the optimisation procedure $\mathcal{A}$ for a given training data $S$. We call this the \textit{optimisation trajectories}, and denote them by $\mathcal{W}$. To access the iterates at every step of the process, we denote by $w_i$ an element of $\mathcal{W}$ at iteration $i$. More concretely, $\mathcal{W}:=\{w \in \mathbb{R}^d: \exists i \in [0,I], w = [\mathcal{A}(S)_i]\}$, where $I$ is the number of training iterations. In other words, when we fix $i$, we are interested in the weights at iteration $i$, returned by the optimisation algorithm $\mathcal{A}$.

For the following result, there is a more technical condition required from \citet[Assumption H1]{birdal2021intrinsic}, which can be found in the Appendix. We denote this assumption as H1. We require the existence of the constant $M > 0$, which quantifies how dependent the set $\mathcal{W}_S$ is on the training sample $S$.
Now that we have all the required definitions, we can proceed with the novel result. 

\begin{theorem}
\label{intrinsic_dimension_main_result}
    Let $\mathcal{W} \in \mathbb{R}^{d}$ be a compact set. Under the assumption that $H1$ holds, $\ell$ is bounded by a constant $C$ and $K$-Lipschitz continuous in $w$. For $n$ sufficiently large, we then have the following bound:
    \begin{equation}
    \begin{aligned}
    \sup_{w \in \mathcal{W}}|L_{S}(w) - L_{\mathcal{D}}(w)| \leq
    \\ 2C \sqrt{\frac{[\mathrm{dim_{Mag}}\mathcal{W} + 1]\log^2(nK^2)}{n} + \frac{\log(7M/\gamma)}{n}}
    \end{aligned}
    \end{equation}
    with probability $1-\gamma$ over $S \sim \mathcal{D}^{\bigotimes n}$, where $M$ is the constant from Assumption H1.
\end{theorem}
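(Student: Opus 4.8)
The plan is to obtain this bound as a consequence of the intrinsic-dimension generalisation result of \citet{birdal2021intrinsic} combined with our \cref{magnitude_equals_ph}. Under exactly the hypotheses assumed here ($\mathcal{W}$ compact, Assumption H1 with constant $M$, $\ell$ bounded by $C$ and $K$-Lipschitz in $w$), \citet{birdal2021intrinsic} establish, for $n$ sufficiently large and with probability $1-\gamma$ over $S \sim \mathcal{D}^{\bigotimes n}$, a bound identical to the claimed one except with $\mathrm{dim_{PH}^{0}}\mathcal{W}$ in place of $\mathrm{dim_{Mag}}\mathcal{W}$:
\begin{equation*}
\sup_{w \in \mathcal{W}}|L_{S}(w) - L_{\mathcal{D}}(w)| \leq 2C\sqrt{\frac{[\mathrm{dim_{PH}^{0}}\mathcal{W} + 1]\log^2(nK^2)}{n} + \frac{\log(7M/\gamma)}{n}}.
\end{equation*}
Given this, the only remaining work is to replace $\mathrm{dim_{PH}^{0}}\mathcal{W}$ by $\mathrm{dim_{Mag}}\mathcal{W}$: since $\mathcal{W} \subset \mathbb{R}^d$ is compact and $\mathrm{dim_{Mag}}\mathcal{W}$ is assumed to exist, \cref{magnitude_equals_ph} gives $\mathrm{dim_{Mag}}\mathcal{W} = \mathrm{dim_{Mink}}\mathcal{W} = \mathrm{dim_{PH}^{0}}\mathcal{W}$, and substituting yields the statement.

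For completeness I would also recall the skeleton of the underlying argument so the proof is self-contained. First, using that $\ell$ is $K$-Lipschitz in $w$ and bounded by $C$, a discretisation over a minimal $\delta$-cover $\mathcal{C}_\delta$ of the compact set $\mathcal{W}$ shows that $\sup_{w\in\mathcal{W}}|L_S(w)-L_\mathcal{D}(w)|$ exceeds $\sup_{w'\in\mathcal{C}_\delta}|L_S(w')-L_\mathcal{D}(w')|$ by at most $2K\delta$. Second, one applies a Hoeffding-type concentration inequality over the finite cover, of cardinality $N_\delta(\mathcal{W})$, invoking Assumption H1 and the constant $M$ to absorb the fact that $\mathcal{W}=\mathcal{W}_S$ depends on the very sample $S$ used to estimate the risk, so that the cover cannot be fixed in advance; choosing $\delta$ equal to a negative power of $n$ renders the $2K\delta$ term lower order and produces the $\log^2(nK^2)$ factor. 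Third, by the $\limsup$ definition of $\mathrm{dim_{Mink}}\mathcal{W}$, for every $\varepsilon>0$ there is $\delta_0>0$ with $N_\delta(\mathcal{W}) \le \delta^{-(\mathrm{dim_{Mink}}\mathcal{W}+\varepsilon)}$ for all $\delta<\delta_0$; taking $\varepsilon=1$ and $n$ large enough that the chosen $\delta$ lies below $\delta_0$ converts $\log N_\delta(\mathcal{W})$ into $[\mathrm{dim_{Mink}}\mathcal{W}+1]\log(1/\delta)$, exactly the term appearing in the bound.

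The main obstacle lies in the second step: the optimisation trajectory $\mathcal{W}_S$ is a random set depending on $S$, so a naive union bound over a data-independent cover is unavailable. This is precisely what Assumption H1 is designed for — it quantifies via the constant $M$ how weakly $\mathcal{W}_S$ depends on $S$ (an algorithmic-stability-type condition) — and the delicate part is to pass this through the concentration step while paying only the additive $\log(7M/\gamma)/n$ term; I would invoke the treatment of \citet{birdal2021intrinsic} directly here rather than reprove it. A secondary point needing care is the meaning of "$n$ sufficiently large": one must pick $n_0$ so that simultaneously the covering-number estimate of the third step is valid (the chosen power of $1/n$ is below the relevant $\delta_0$) and the discretisation error $2K\delta$ is dominated by the principal term; both are possible because $\mathrm{dim_{Mag}}\mathcal{W}$, and hence $\mathrm{dim_{Mink}}\mathcal{W}$, is finite.
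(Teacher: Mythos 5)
Your proposal is correct and follows essentially the same route as the paper's own proof: invoke Proposition~1 of \citet{birdal2021intrinsic} under the stated hypotheses and then substitute $\mathrm{dim_{PH}^{0}}\mathcal{W}$ by $\mathrm{dim_{Mag}}\mathcal{W}$ via \cref{magnitude_equals_ph}. The additional sketch of the covering/concentration argument and the discussion of Assumption H1 go beyond the paper's two-line proof but do not change the approach.
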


\begin{proof}
    Since $\mathcal{W}$ is bounded, we have $\mathrm{dim_{PH}^{0}}\mathcal{W} = \mathrm{dim_{Mag}}\mathcal{W}$. Therefore, the result follows from substituting $\mathrm{dim_{PH}^{0}}\mathcal{W}$ with $\mathrm{dim_{Mag}}\mathcal{W}$ in Proposition 1~\citep{birdal2021intrinsic}.
\end{proof}

\begin{figure*}[ht!]
    \centering
    \subfigure[]{\includegraphics[width=0.25\textwidth]{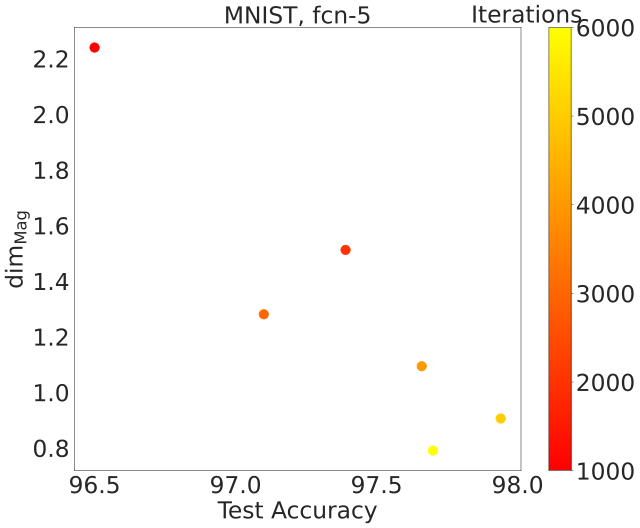}} 
    \subfigure[]{\includegraphics[width=0.25\textwidth]{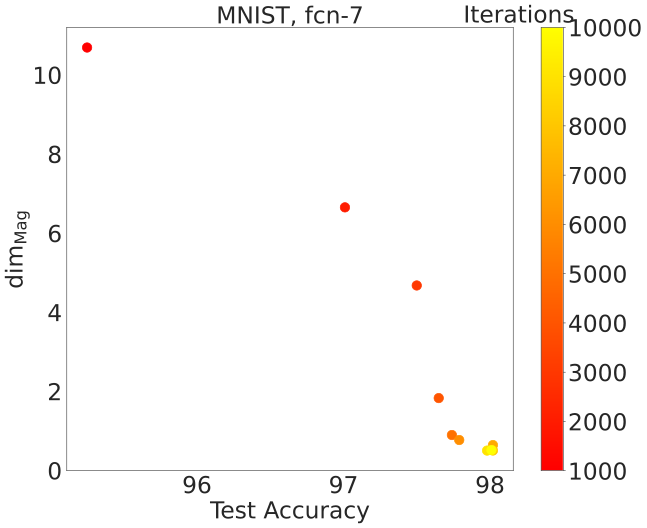}} 
    \subfigure[]{\includegraphics[width=0.25\textwidth]{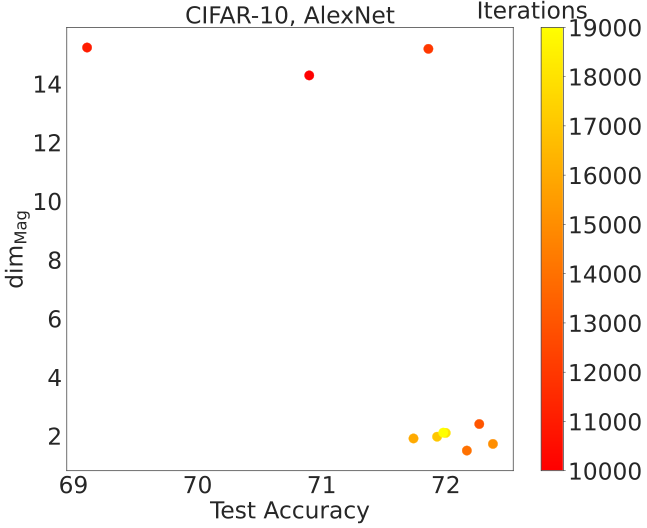}} 
    \caption{\textbf{The magnitude dimension correlates with the test accuracy.}
    In plots (a-c), we see the magnitude dimension plotted against the test accuracy over a varying number of iterations, which are depicted in different colours, from red to yellow. We note that there is correlation between the magnitude dimension and the test accuracy across both MNIST and CIFAR-10, and across different architectures (FCN-5, FCN-7, AlexNet), which is stronger for MNIST than for CIFAR-10.  }
    \label{fig:magnitude_dimension_various_plots}
\end{figure*}

\section{Methods}

In contrast with previous work on the intrinsic dimension, we propose to estimate the fractal dimension using the magnitude. As we have seen, the concept of magnitude dimension coincides with the Minkowski dimension and the persistent homology dimension. This theoretical connection enables us to confidently explore the concept of magnitude in the context of neural networks. We do this as follows: at selected points on the weight trajectory $\mathcal{W}$, we subsample a number of models $W$, which can be interpreted as a point cloud, where each point is a model in the model space. This space has a very high dimension equal to the number of parameters in the network. We compute the magnitude and the magnitude dimension of each such point cloud. We then investigate the connection between these quantities and the test accuracy.

In light of our novel result in Theorem \ref{intrinsic_dimension_main_result}, linking the intrinsic dimension and the generalisation bound, it is natural to ask if the magnitude function can also be used to explore the learning process of neural networks. Since the magnitude dimension can be roughly interpreted as the rate of change of the magnitude function, and therefore, if there is a link between the magnitude dimension and the generalisation error, then there should also be a link between the values of the magnitude function at different scales and the generalisation error. What we want to study is the change of the space of model trajectories as the learning process advances. In other words, does the space look like a bigger number of distinct points or does it resemble fewer points as the training progresses? Translating this idea to the language of magnitude, is the \textit{effective number of models} increasing or decreasing when performing more iterations of the learning algorithm?


Since magnitude is a function, we would like to take a cross-sectional slice of the magnitude curve and examine the magnitude values for a particular choice of the scale parameter $t$. Therefore, we formulate the definition of the effective number of models for a fixed $t$.

\begin{definition}
    We define the \textbf{effective number of models} as the value of $\mathrm{Mag}(t_i\mathcal{W})$ at scale $t_i$.
\end{definition}

\subsection{Analyzing Deep Neural Network Dynamics via the Magnitude Dimension}


Estimating the magnitude dimension is not a straightforward task, as the limit from Equation \ref{mag_dimension} needs to be approximated by finding the longest straight part of the magnitude function, which cannot be computed automatically, but needs to be done manually. Here we provide the algorithm for computation of $\mathrm{dim_{Mag}}\mathcal{W}$ from a finite sample $W$, approximating an infinite process. We follow the procedure similar to the estimations of the magnitude dimension \cite{willerton2009heuristic,o2023alpha}. The details are described in \cref{alg:example}. After choosing a suitable representative interval $[t_i, t_j]$, the log-log plot of magnitude versus $t$ is generated, and the slope $m$ of the line and the intercept $b$ are computed at the selected interval $[t_i, t_j]$. Then, $m$ is taken to be the estimate of $\mathrm{dim_{Mag}}\mathcal{W}$. This procedure can be essentially seen as computing the limit based on the slope, which is an application of l'Hôpital's rule.

\begin{algorithm}[tb]
   \caption{Estimation of $\mathrm{dim_{Mag}}\mathcal{W}$}
   \label{alg:example}
\begin{algorithmic}
   \STATE {\bfseries Input:} The set of the training trajectories $\mathcal{W} = \{w_{i}\}^n$ of size $n$, scale parameter $t:[0,t_k]$, interval $[t_i,t_j], i<j<k$
   \STATE {\bfseries Output:} $\mathrm{dim_{Mag}}$
   \FOR{$r=1$ {\bfseries to} $k$}
   \STATE $\mathrm{Mag}(t_r\mathcal{W}) \leftarrow \mathcal{W} $ 
   \ENDFOR
   \STATE $m, b \leftarrow$ fitline$(log(\mathrm{Mag}(\mathcal{W}_n)[t_i:t_j]), log([t_i:t_j]))$ 
   \STATE $\mathrm{dim_{Mag}}\mathcal{W} \leftarrow m$ 
\end{algorithmic}
\end{algorithm}

\begin{figure*}[ht!]
    \centering
    \subfigure[]{\includegraphics[width=0.24\textwidth]{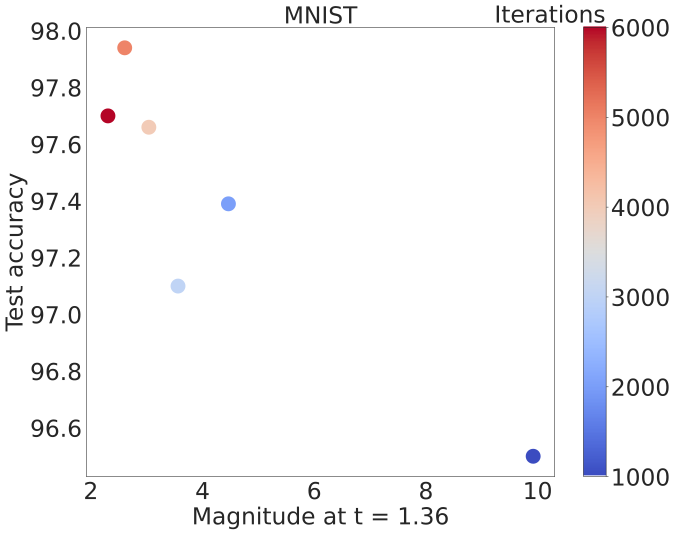}} 
    \subfigure[]{\includegraphics[width=0.24\textwidth]{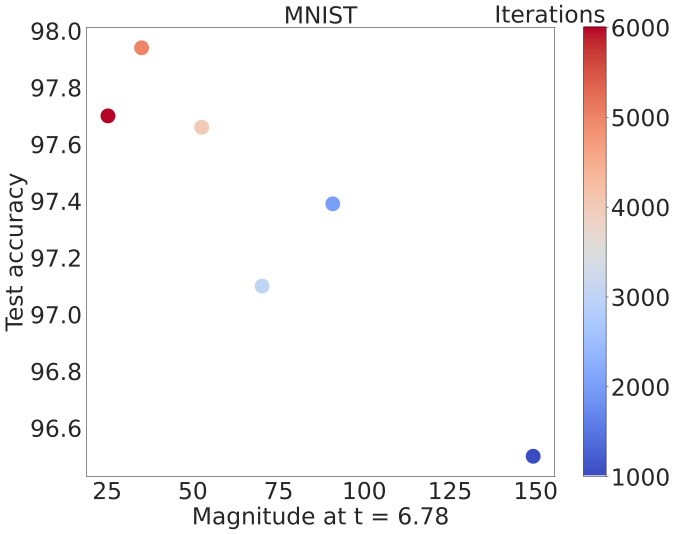}} 
    \subfigure[]{\includegraphics[width=0.24\textwidth]{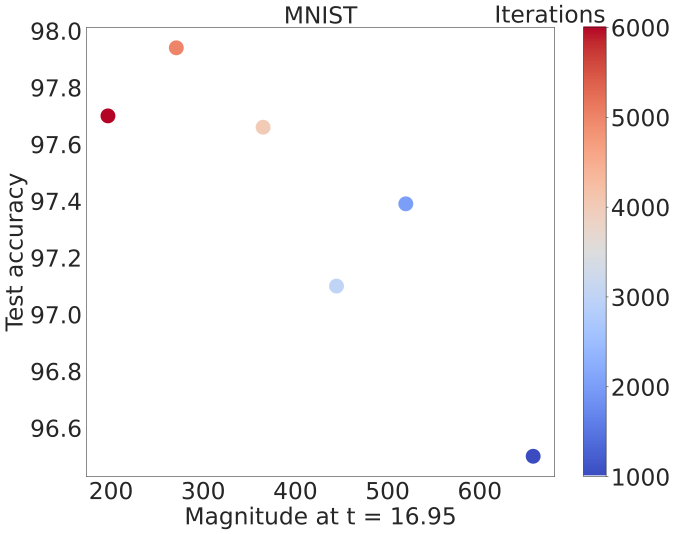}}
    \subfigure[]{\includegraphics[width=0.24\textwidth]{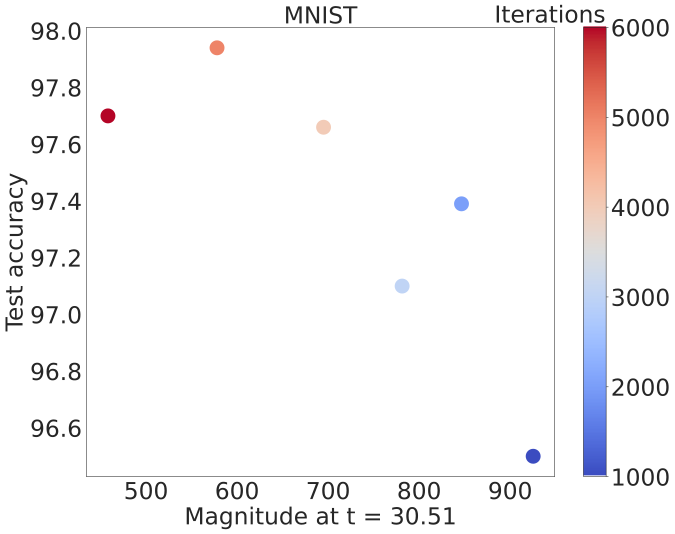}}
    \subfigure[]{\includegraphics[width=0.24\textwidth]{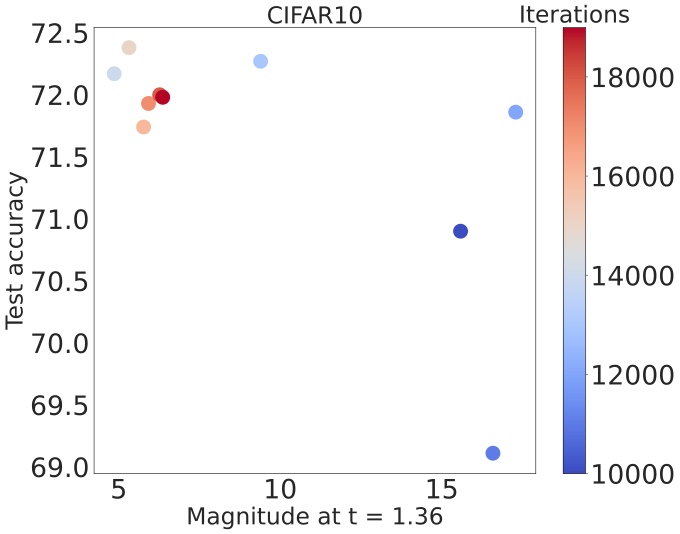}}
    \subfigure[]{\includegraphics[width=0.24\textwidth]{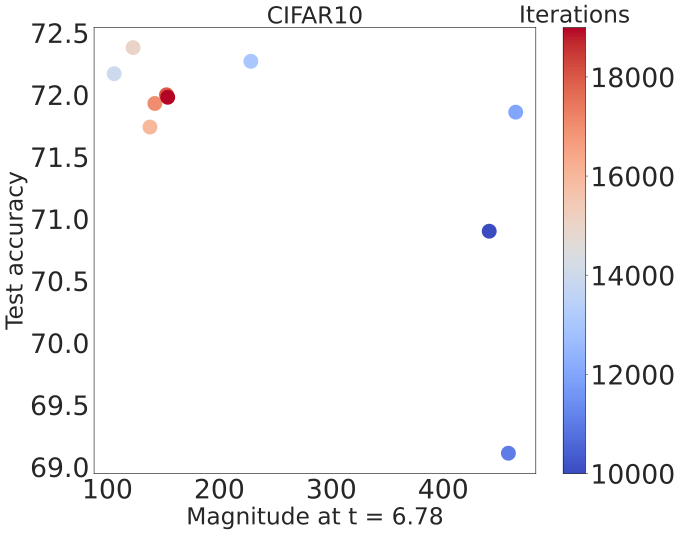}}
    \subfigure[]{\includegraphics[width=0.24\textwidth]{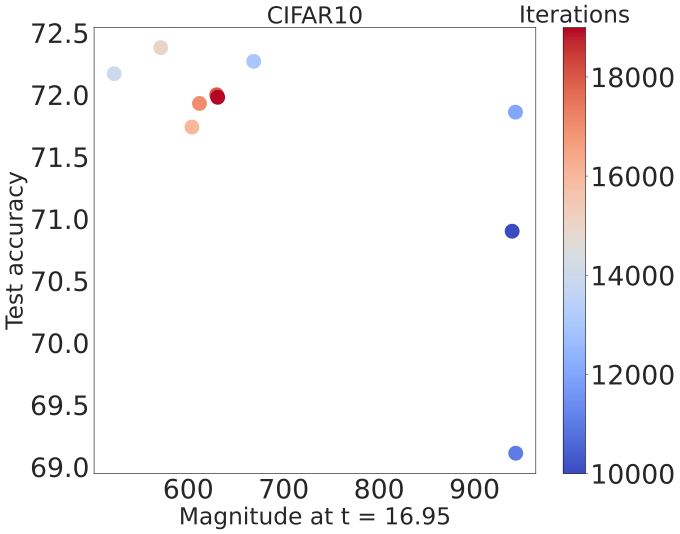}}
    \subfigure[]{\includegraphics[width=0.24\textwidth]{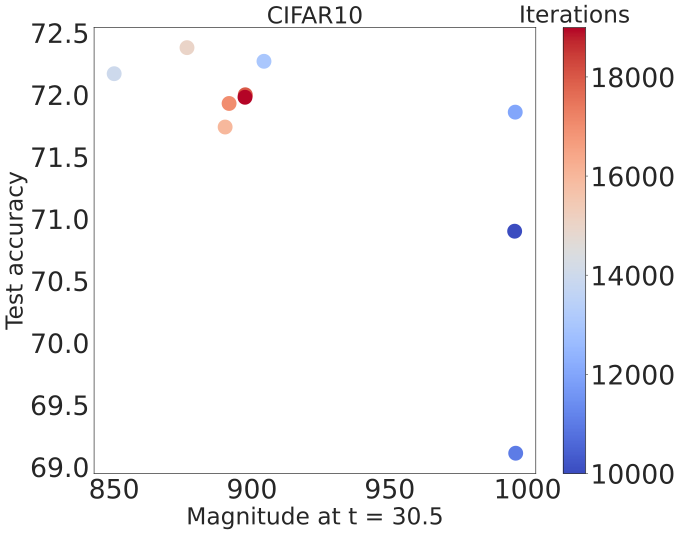}}
    \caption{\textbf{The effective number of models correlates with the test accuracy.} Here we see the cross-sectional evaluation of the magnitude curve at different values of $t$. Each point represents the model trajectory over 1000 iterations, over which the magnitude is computed, as well as the test accuracy at the last model in the sliding window. The first row (plots (a-d)) shows the magnitude for MNIST. The second row (plots (e-h)) shows the plots for CIFAR10. We note that across all scales, there is similar pattern of correlation between the test accuracy and the magnitude values. }
    \label{fig:magnitude_scales_learning_rates}
\end{figure*}

\section{Experimental Results}


The first goal of this section is to verify our  main claim, namely that our estimate of the magnitude dimension is capable of measuring generalisation. The second goal is to establish a connection between magnitude itself and the test accuracy. The third goal is to empirically compare the two very close measures of the fractal dimension, namely $\mathrm{dim_{Mag}}\mathcal{W}$ and $\mathrm{dim_{PH}}\mathcal{W}$. The fourth goal is to attempt to explain what our results mean for generalisation in neural networks in general, arriving at novel insights. In order to show this, we use our procedure on a number of different neural network architectures, training settings and datasets. In particular, we train a 5-layer (fcn-5) and 7-layer (fcn-7) on MNIST and AlexNet \cite{krizhevsky2017imagenet} on CIFAR10, over a different range of learning rates and batch size of 100. We consider a sliding window of 1000 training iterations. We then estimate $\mathrm{dim_{Mag}}\mathcal{W}$ of each window, following the steps in \cref{alg:example}.
 
\subsection{Exploring the learning process}

We assess the main claim of the paper, which links the magnitude dimension with the generalisation error. Figure \ref{fig:magnitude_dimension_various_plots} reveals multiple findings. First, we observe that there is a correlation between the magnitude dimension and the test accuracy---the lower the magnitude dimension, the higher the test accuracy. This result agrees with what has previously been observed in \citet{birdal2021intrinsic,simsekli2020hausdorff}, albeit from the perspective of \emph{magnitude}, an invariant that is arguably simpler to compute and more interpretable in practice than persistent homology. Second, this holds across both MNIST and CIFAR10, and also across different architectures, which shows that even though the parameters differ considerably, the intrinsic dimension of different datasets can still be similar. To achieve good generalisation performance, it is important to keep the dimension as small as possible, without losing important representational features by collapsing them onto the same dimension.

\begin{figure*}[ht!]
    \centering
    \subfigure[]{\includegraphics[width=0.26\textwidth]{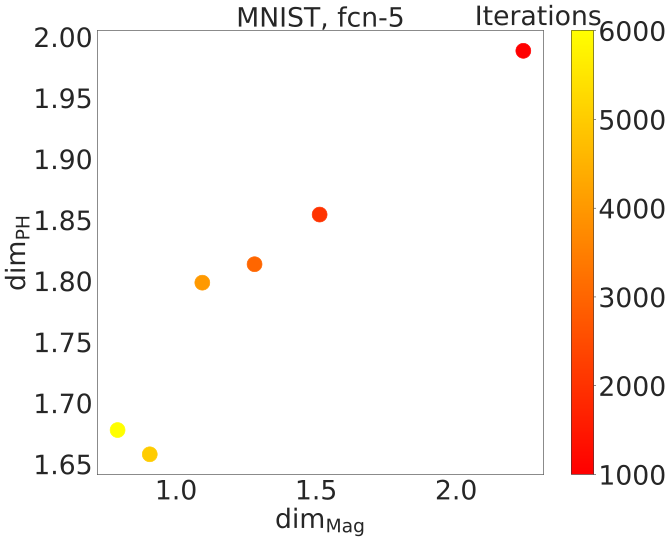}} 
    \subfigure[]{\includegraphics[width=0.22\textwidth]{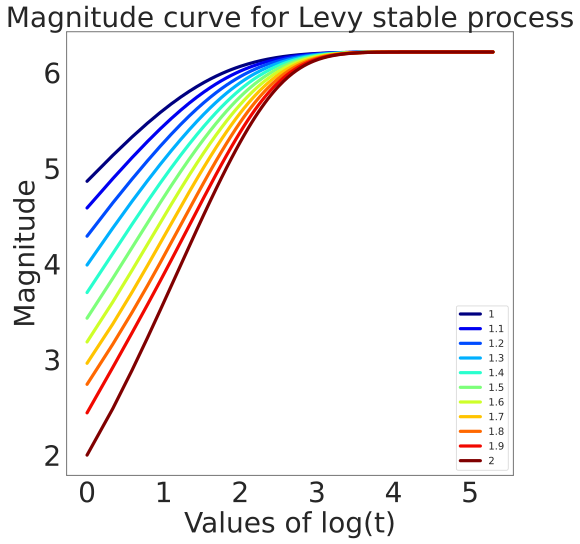}} 
    \subfigure[]{\includegraphics[width=0.22\textwidth]{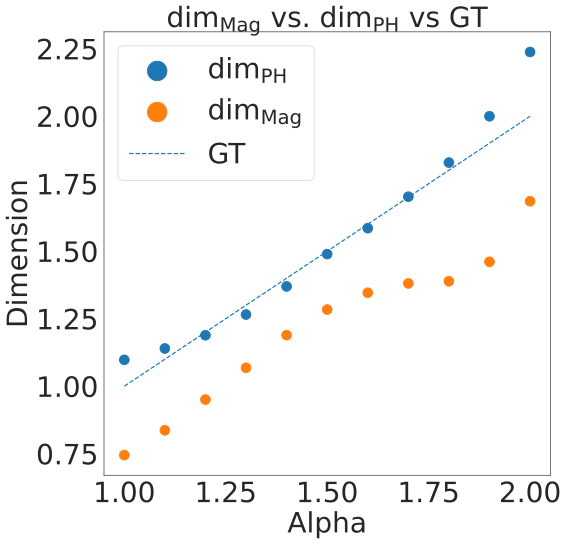}} 
    \caption{\textbf{The magnitude dimension, persistent homology dimension and ground truth for an $\alpha$-Levy stable process.} In plot (a) we compare the $\mathrm{dim_{Mag}}\mathcal{W}$ and $\mathrm{dim_{PH}}\mathcal{W}$ against the number of iterations. There is strong correlation of $0.96$, with statistically significant p-value ($p < 0.05$). In plot (b) we see the magnitude curves for different values of $\alpha$. In plot (c) we see the magnitude curves of the $\alpha$-stable Levy process for multiple values of $\alpha$.
     }
    \label{fig:ablation_study_various_plots}
\end{figure*}

\subsection{Analysing and visualising network trajectories}
Next, we want to investigate the effect on the magnitude function as the network is trained for more iterations. For this purpose, we select four values of $t$ across the range $[0, 40]$, $t \in \{1.36, 6.78, 16.95, 30.51\}$ and we compute the effective number of models $\mathrm{Mag}(t\mathcal{W})$ to give us a glimpse into the space of training trajectories. We further fix the learning rate and vary the number of iterations. The experiments were performed with a learning rate of $0.1$ for illustrative purpose. However, we note that this pattern holds for multiple learning rates. In Figure \ref{fig:magnitude_scales_learning_rates} we can see the resulting magnitude value at each of the selected scales of $t$, and the colour depicts the number of iterations. We note that there is a concentration of red points in the upper left corner, indicating that higher test accuracy corresponds to a lower value of magnitude. Similarly, for the blue points, the higher value of magnitude is linked to worse test set performance. This pattern holds across both MNIST and CIFAR10. Moreover, the lower the test accuracy, the higher the magnitude value, implying that models with lower magnitude values generalise better. This result intuitively makes sense. When the magnitude value is small, the space looks like a smaller number of points. With the increase of $t$, the magnitude converges to the cardinality of the set. The effective number of models in (a) equals approximately 2 for high test accuracy, which is an interesting phenomenon. This means that out of the 1000 models, for $t = 1.38$, the space looks like 2 models, indicating that there are 2 large clusters formed by the training trajectories. Surprisingly, this is not the case for network trained across 1000 iterations, with the lowest test accuracy. In that case, the space of models looks like 10 points and is more scattered.

Note that the model with high test accuracy has smaller magnitude even at larger scales, suggesting that the models exhibit some sort of clustering behavior. In the magnitude terminology, the effective number of distinct models is smaller when the network generalises better; a "good optimisation trajectory" has a lower magnitude than a "bad optimisation trajectory"; the learning algorithm is implicitly optimising the magnitude, by clustering the models into a small number of clusters. Throughout our experiments we thus observed strong correlation between magnitude itself and the test accuracy, which persists across the different scales, learning rates and datasets. Therefore, the main insight from the results in Figure \ref{fig:magnitude_scales_learning_rates} is that a small effective number of models is good for generalisation and it indicates that there is a clustering pattern.

\subsection{Similarities between $\mathrm{dim_{Mag}}\mathcal{W}$ and $\mathrm{dim_{PH}}\mathcal{W}$}

In Figure \ref{fig:ablation_study_various_plots}(a), we demonstrate that there is a strong correlation between $\mathrm{dim_{Mag}}\mathcal{W}$ and $\mathrm{dim_{PH}}\mathcal{W}$ on MNIST. Although there is a good correspondence between the two, $\mathrm{dim_{PH}}\mathcal{W}$ is more difficult to interpret. In order to give some interpretation, the authors had to take a step back and produce the persistent diagrams which were used to calculate the specified dimension. However, due to the fact that they have to sample the space to estimate the dimension, these calculations involve a big number of different diagrams, hence linking $\mathrm{dim_{PH}}\mathcal{W}$ with the original trajectory is not so straightforward. On the other hand, there is a more clear connection between magnitude and the magnitude dimension and the appearance of clusters of weight parameters.

\paragraph{Ablation study.}
In Figure \ref{fig:ablation_study_various_plots} we see the results from an ablation study. We simulate data from a process similar to the weight trajectories with known ground truth, using a $d$-dimensional $\alpha$-stable Levy  process \cite{seshadri1982fractal}, where we take $d=10$, and compare the values of the magnitude dimension to the fractal dimension. Further, we compare it with the persistent homology dimension, which as we have proven in Theorem \ref{magnitude_equals_ph}, is the same as the magnitude dimension. The purpose of this study is to empirically evaluate this theoretical result. As seen in Figure \ref{fig:ablation_study_various_plots}(b), our dimension highly correlates with the ground truth. It seems that it is consistently lower than the true dimension by approximately $0.3$. Nonetheless, the high statistically significant correlation indicates that the magnitude dimension is indeed very close to the fractal dimension for a process which resembles the iterations of SGD. This gives us high confidence that the magnitude dimension measures exactly what it is supposed to measure.

\section{Conclusion}

Our work provides the first connection between magnitude and the generalisation error. We proved a theoretical result linking the magnitude dimension of the optimisation trajectories and the test accuracy. We verified our results experimentally by exploring the evolution of magnitude across multiple experimental settings and datasets. We have further expanded our understanding about the clustering property of the weight trajectory: through both theoretical and empirical results, we demonstrated that models with better generalisation error tend to cluster more, compared to models with worse test performance, which are of higher magnitude and more spread out. This phenomenon has been previously described~\cite{bruel2019topology,birdal2021intrinsic}, and in this work our observations supplement it. In future work, we will improve on the estimation of the magnitude dimension and investigate the use of magnitude as a regulariser. 
Moreover, by using magnitude itself, we demonstrated that we can monitor the performance of the neural network without a validation set. In future work we can explore magnitude as an early stopping criteria, similar to \citet{Rieck19a}. Furthermore, from our empirical ablation study, the magnitude dimension seems to be consistently lower than the ground truth, which will be investigated further. In particular, we want to consider to what extent it is possible to improve on the generalisation bound in Theorem \ref{intrinsic_dimension_main_result}.

\subsection*{Acknowledgements}

The authors wish to thank Alexandros Keros and Emily Roff for helpful
comments and insightful discussions that helped improve the manuscript.
RA is supported by the United Kingdom Research and Innovation (grant
EP/S02431X/1), UKRI Centre for Doctoral Training in Biomedical AI at the
University of Edinburgh, School of Informatics and the International
Helmholtz-Edinburgh Research School for Epigenetics (EpiCrossBorders).
KL is supported by Helmholtz Association under the joint research school
``Munich School for Data Science - MUDS.''

\bibliography{main}
\bibliographystyle{icml2022}

\newpage
\appendix
\onecolumn
\section{Intrinsic dimensions and equalities}

The following result has been proven by \citet{meckes2015magnitude}, which related the magnitude dimension with the Minkowski dimension of compact subsets of Euclidean space. 

\begin{theorem}[Corollary 7.4, \cite{meckes2015magnitude}]
If $X \subset \mathbb{R}^n$ is compact and if $\mathrm{dim_{Mag}}X$ or $\mathrm{dim_{Mink}}X$ exist, then both exist and $\mathrm{dim_{Mag}}X = \mathrm{dim_{Mink}}X$.
\label{minkowski_equals_magnitude}
\end{theorem}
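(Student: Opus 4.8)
The plan is to deduce the statement from Meckes's quantitative comparison between the magnitude of a scaled space and the covering/packing numbers that define the box dimension; since this is precisely Corollary~7.4 of \cite{meckes2015magnitude}, I would reconstruct that argument. A first reduction passes from the compact set $X$ to its finite subsets: every finite subset of $\mathbb{R}^n$ is positive definite (Theorem~2.5.3 of \cite{leinster2013magnitude}), so $\mathrm{Mag}(tX)$ is well defined for all $t>0$, and the magnitude of the compact space $tX$ is recovered as the supremum of the magnitudes of its finite subsets. Hence the growth of $t \mapsto \mathrm{Mag}(tX)$ is controlled by the magnitudes of the $t$-scalings of finite $\delta$-nets of $X$, which is the point at which the geometry of $X$ enters.

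The core of the argument is a two-sided comparison of the form
\[
c_n \, N^{\mathrm{pack}}_{C_1/t}(X) \;\le\; \mathrm{Mag}(tX) \;\le\; C_n \, N^{\mathrm{cov}}_{C_2/t}(X),
\]
with constants depending only on the ambient dimension $n$ (and on inessential scale factors $C_1,C_2$). The upper bound follows from the Hilbert-space variational description of magnitude together with a covering argument: a measure on $tX$ can be compared with one pushed onto a fine net, and a piece of small diameter contributes essentially $1$. The lower bound is the delicate half. Because magnitude is neither monotone nor subadditive in general, one cannot just restrict to a subset; instead one routes the estimate through the \emph{maximum diversity} $D^{\max}$, which \emph{is} monotone and satisfies $\mathrm{Mag}(Y) \ge D^{\max}(Y)$ for positive definite $Y$ by the Leinster--Meckes inequality. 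Taking a maximal $(C_1/t)$-separated subset $A \subseteq X$, the scaled space $tA$ is nearly discrete, so $D^{\max}(tA)$ is of order $|A| = N^{\mathrm{pack}}_{C_1/t}(X)$, and monotonicity gives $\mathrm{Mag}(tX) \ge D^{\max}(tX) \ge D^{\max}(tA) \gtrsim |A|$.

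Finally I would take logarithms in the sandwich and divide by $\log t$. Writing $\delta = 1/t$, the numbers $N^{\mathrm{pack}}_{C_i\delta}(X)$ and $N^{\mathrm{cov}}_{C_i\delta}(X)$ all agree, up to bounded multiplicative factors and a harmless rescaling of $\delta$, with the quantity $N_\delta(X)$ of Definition~\ref{minkowski}; and multiplicative constants contribute only a term of order $1/\log t \to 0$. Therefore $\frac{\log \mathrm{Mag}(tX)}{\log t}$ and $\frac{\log N_{1/t}(X)}{\log t}$ have the same $\limsup$ and the same $\liminf$ as $t \to \infty$. In particular one of these limits exists (that is, $\liminf = \limsup$) if and only if the other does, and when they exist they are equal; identifying the second with $\mathrm{dim_{Mink}}X$ yields $\mathrm{dim_{Mag}}X = \mathrm{dim_{Mink}}X$.

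The main obstacle is the lower bound in the second stage: the failure of monotonicity and subadditivity of magnitude forces the detour through maximum diversity, and one still needs the genuinely geometric estimate that a well-separated net inside $X \subset \mathbb{R}^n$ realises diversity proportional to its cardinality, with a constant depending only on $n$. Once that is secured, the covering estimate for the upper bound and the passage to the logarithmic limit are routine.
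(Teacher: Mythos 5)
The paper does not actually prove this statement: it is quoted verbatim as Corollary~7.4 of \citet{meckes2015magnitude} and used as a black box (both here and inside the proof of Theorem~\ref{magnitude_equals_ph}), so there is no in-paper argument to compare against. Your plan is, in outline, a faithful reconstruction of Meckes's own proof: approximate the compact positive definite space by finite subsets, sandwich $\mathrm{Mag}(tX)$ between packing and covering numbers at scale $\sim 1/t$ with constants depending only on $n$, route the lower bound through the maximum diversity $D^{\max}$ and the inequality $D^{\max}\le\mathrm{Mag}$, and then pass to logarithms so that multiplicative constants wash out and $\limsup$/$\liminf$ of the two ratios coincide. That is the right architecture, and you correctly isolate the crux, namely the geometric estimate that a maximal separated net in $\mathbb{R}^n$ realises diversity proportional to its cardinality.

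One inaccuracy in your justification: for compact subsets of $\mathbb{R}^n$ the similarity kernel is positive definite, and on positive definite compact spaces magnitude \emph{is} monotone under inclusion \citep{meckes2013positive}, so ``one cannot just restrict to a subset'' is not the true obstruction here. The real difficulty is that restricting to a $(C_1/t)$-separated set $A$ and scaling by $t$ yields a finite space whose separation is only a fixed constant $C_1$, and for such a space the elementary lower bound on magnitude degrades as $|A|$ grows; it does not give $\mathrm{Mag}(tA)\gtrsim|A|$ on its own. That is precisely why the detour through $D^{\max}$ and a doubling-type argument specific to $\mathbb{R}^n$ is needed, as you anticipate in your final paragraph. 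With that correction, the plan is sound and matches the cited source's strategy.
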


Using the theorem above, one can use the magnitude dimension to approximate the Minkowski dimension.

\begin{theorem}[Heine-Borel] Let $A \subset \mathbb{R}^n$. Then $A$ is compact if and only if $A$ is both closed and bounded.
\label{heine-borel}
\end{theorem}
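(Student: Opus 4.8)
The plan is to prove the two implications separately, both via the open-cover definition of compactness used throughout the paper.

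For the forward direction, suppose $A$ is compact. To see that $A$ is bounded, I would cover it by the nested open balls $\{B(0,k)\}_{k\in\mathbb{N}}$; a finite subcover is contained in the largest of these balls $B(0,N)$, so $A\subseteq B(0,N)$. To see that $A$ is closed, I would show $\mathbb{R}^n\setminus A$ is open: given $x\notin A$, the balls $B\bigl(a,\tfrac12\|x-a\|\bigr)$ for $a\in A$ form an open cover of $A$; extracting a finite subcover indexed by $a_1,\dots,a_k$ and setting $r=\min_i\tfrac12\|x-a_i\|>0$, one checks that $B(x,r)$ is disjoint from $A$. Hence $A$ is both closed and bounded.

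For the reverse direction, the key step is to establish that a closed box $Q_0=[-R,R]^n$ is compact, after which the general case follows easily. For the box I would argue by contradiction using repeated bisection: if some open cover $\mathcal{U}$ of $Q_0$ admitted no finite subcover, split $Q_0$ into $2^n$ congruent closed sub-boxes; at least one, call it $Q_1$, is not finitely covered by $\mathcal{U}$. Iterating produces a nested sequence $Q_0\supseteq Q_1\supseteq\cdots$ with $\operatorname{diam}(Q_k)=2^{-k}\operatorname{diam}(Q_0)\to 0$, and by completeness of $\mathbb{R}^n$ applied coordinatewise the intersection $\bigcap_k Q_k$ is a single point $x$. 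Some $U\in\mathcal{U}$ contains $x$, hence contains $Q_k$ for all sufficiently large $k$, contradicting that $Q_k$ is not finitely covered. Finally, if $A$ is closed and bounded, pick $R$ with $A\subseteq Q_0$; given an open cover $\mathcal{V}$ of $A$, the family $\mathcal{V}\cup\{\mathbb{R}^n\setminus A\}$ is an open cover of $Q_0$, so it has a finite subcover, and discarding $\mathbb{R}^n\setminus A$ leaves a finite subcover of $A$.

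I expect the main obstacle to be the compactness of the box $Q_0$: this is where the genuine topological content lies, and it is the only place where completeness of $\mathbb{R}$ (equivalently, the least-upper-bound property) is essential—every other step is a short formal manipulation of open covers. One should also take mild care in verifying that the nested boxes shrink to \emph{exactly} one point, which follows from $\operatorname{diam}(Q_k)\to 0$ together with the nested-interval property of $\mathbb{R}$.
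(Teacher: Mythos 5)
Your proof is correct: the forward direction (compact $\Rightarrow$ bounded via the nested balls $B(0,k)$, compact $\Rightarrow$ closed via the separating cover $B\bigl(a,\tfrac12\|x-a\|\bigr)$) and the reverse direction (bisection argument for the closed box $Q_0$, then passing to a closed subset by adjoining $\mathbb{R}^n\setminus A$ to the cover) are the standard, complete argument, and the one subtle point you flag—that $\bigcap_k Q_k$ is exactly one point—is handled correctly via $\operatorname{diam}(Q_k)\to 0$ and completeness. Note, however, that the paper itself offers no proof of this statement: Theorem~\ref{heine-borel} is recorded in the appendix purely as a classical result to be invoked in the proof of Theorem~\ref{magnitude_equals_ph} (to pass from compactness of $X$ to boundedness, so that the result of \citet{schweinhart2021persistent} applies). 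So there is no authorial argument to compare against; your write-up is simply the standard textbook proof, and it is sound.
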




\begin{theorem}\cite{schweinhart2021persistent}
Let $X \subset \mathbb{R}^n$ be a bounded set. Then $\mathrm{dim_{PH}^{0}}X = \mathrm{dim_{Mink}}X$.
\label{ph_equals_box}
\end{theorem}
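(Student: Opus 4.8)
This is, in substance, the minimal‑spanning‑tree dimension theorem of Kozma--Lotker--Stupp, recast through persistent homology by Schweinhart; I would reconstruct the argument as follows. The first step is the degree‑$0$ dictionary between $PH^0$ and spanning trees. In $\mathrm{Rips}(W)$ every point is present already at scale $0$, so every class of $PH^0$ is born at $b=0$ and dies exactly when its connected component merges with another one; tracking these merges by the elder rule shows that the multiset of death values is precisely the multiset of edge lengths of a minimum spanning tree $\mathrm{MST}(W)$, which has $|W|-1$ edges. Hence $E_\alpha(W)=\sum_{e\in\mathrm{MST}(W)}\ell(e)^\alpha$, so that $\mathrm{dim_{PH}^{0}}X=\inf\{\alpha>0:\sup_{W\subset X\text{ finite}}\sum_{e\in\mathrm{MST}(W)}\ell(e)^\alpha<\infty\}$. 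After rescaling $X$ so that $\mathrm{diam}(X)\le 1$ (which changes neither dimension), it remains to show that this ``MST dimension'' coincides with $\mathrm{dim_{Mink}}X$, which I would do by two inequalities.

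For the lower bound $\mathrm{dim_{PH}^{0}}X\ge\mathrm{dim_{Mink}}X$, fix any $\alpha<\beta<\mathrm{dim_{Mink}}X$; by definition of the upper box dimension there is a sequence $\delta_j\downarrow 0$ with $N_{\delta_j}(X)\ge\delta_j^{-\beta}$. Let $W_j\subset X$ be the set of centres of a maximal family of disjoint closed $\delta_j$‑balls, so $|W_j|=N_{\delta_j}(X)$ and any two of its points are at distance $\ge 2\delta_j$. Then every edge of $\mathrm{MST}(W_j)$ has length $\ge 2\delta_j$, whence $E_\alpha(W_j)\ge(|W_j|-1)(2\delta_j)^\alpha\gtrsim\delta_j^{\alpha-\beta}\to\infty$. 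So no $\alpha<\mathrm{dim_{Mink}}X$ is admissible, and letting $\alpha\uparrow\mathrm{dim_{Mink}}X$ gives the inequality.

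For the upper bound $\mathrm{dim_{PH}^{0}}X\le\mathrm{dim_{Mink}}X$, fix $\alpha>\beta>\mathrm{dim_{Mink}}X$, so $N_\delta(X)\le C_0\delta^{-\beta}$ for all small $\delta$. The crux is the geometric lemma: for any finite $W\subset X$ and any $\ell>0$, the number of edges of $\mathrm{MST}(W)$ of length $\ge\ell$ is at most the size of a maximal $\ell$‑separated subset of $X$, hence $\lesssim_n N_{\ell/3}(X)$. This follows from the cut property of minimum spanning trees: deleting from $\mathrm{MST}(W)$ all edges of length $\ge\ell$ leaves a forest whose components are pairwise at distance $\ge\ell$ (any MST edge leaving a component was deleted, and the cut property puts a shortest cross‑cut edge into the MST), so picking one representative per component yields an $\ell$‑separated subset of $X$ of cardinality $1+\#\{\text{long edges}\}$. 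Grouping the MST edges into dyadic bands $\mathcal{E}_k=\{e:2^{-k-1}<\ell(e)\le 2^{-k}\}$ and applying the lemma with $\ell=2^{-k-1}$ gives $|\mathcal{E}_k|\lesssim_n 2^{k\beta}$ for large $k$, while the finitely many coarse bands (those with $2^{-k}\le 1$ but $2^{-k}$ not yet small) contribute a bounded amount. Therefore \[ E_\alpha(W)=\sum_k\sum_{e\in\mathcal{E}_k}\ell(e)^\alpha\le\sum_k|\mathcal{E}_k|\,2^{-k\alpha}\lesssim\sum_k 2^{k(\beta-\alpha)}<\infty, \] with a bound independent of $W$; hence $\alpha$ is admissible, so $\mathrm{dim_{PH}^{0}}X\le\alpha$, and letting $\alpha\downarrow\mathrm{dim_{Mink}}X$ completes the proof.

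I expect the main obstacle to be exactly the geometric lemma bounding the number of long MST edges by a packing number of $X$: this is the step that converts the combinatorics of the Vietoris--Rips persistence diagram into box‑counting, and it rests on the cut (equivalently cycle) property of minimum spanning trees together with an elementary packing estimate in $\mathbb{R}^n$. The $PH^0$-to-MST identification and the careful handling of the $\limsup$/$\liminf$ in the definition of $\mathrm{dim_{Mink}}$ are routine by comparison. The resulting statement is the one attributed above to \cite{kozma2006minimal,schweinhart2021persistent}.
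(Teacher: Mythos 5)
The paper never proves this statement---it is imported verbatim from \citet{schweinhart2021persistent} (building on \citet{kozma2006minimal}) with only a citation---so there is no internal argument to compare against; your reconstruction is exactly the minimal-spanning-tree proof of those references (deaths in $PH^0$ of the Rips filtration $=$ MST edge lengths, long MST edges controlled by packing numbers via the cut property, dyadic summation) and it is correct. The one convention worth stating explicitly is that the sum in $E_\alpha$ from \cref{persistent_homology_dimension} must exclude the single infinite bar of $PH^0$, which your identification $E_\alpha(W)=\sum_{e\in\mathrm{MST}(W)}\ell(e)^\alpha$ implicitly assumes.
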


\section{Assumption H1}
The following technical assumption has been introduced in previous work \cite{birdal2021intrinsic,simsekli2020hausdorff} and it is necessary for the statement of the novel bound in Theorem \ref{intrinsic_dimension_main_result}
For any $\delta > 0$, define the fixed grid on $\mathbb{R}^d$ to be

\begin{equation}
    G = \biggl\{ \left( \frac{(2j_i+1)\delta}{2\sqrt{\delta}},\ldots, \frac{(2j_d+1)\delta}{2\sqrt{\delta}}  \right): j_i \in \mathbb{Z}, i = 1, \ldots, d \biggl\}.
\end{equation}
The collection of centres of each ball is denoted by the set $N_{\delta}$. Now, we define 
$N_{\delta}(S) = \{ x \in N_{\delta}: B_d(x, \delta) \cap \mathcal{W_S} \neq \emptyset \}$, where $S$ is the training set, $B_d(x, \delta) \subset \mathbb{R}^d$ 
denotes the closed ball centered around $x \in \mathbb{R}^d$ of radius $\delta$. Then $N_{\delta}(S)$, as defined, is the collection of centres of each ball that intersects $\mathcal{W}$.

\textbf{H1}:
Let $\mathcal{Z}^{\infty} \coloneqq (\mathcal{Z} \times \mathcal{Z} \times \mathcal{Z} \times \ldots)$ denote the countable product endowed with the product topology and let $\mathfrak{B}$ be the Borel $\sigma$-algebra generated by $\mathcal{Z}^{\infty}$. Let $\mathfrak{F,G}$ be the sub-$\sigma$-algebras of $\mathfrak{B}$, generated by the collections of random variables given by $\{ L_S(w): w \in \mathcal{W}, n \geq 1 \}$ and $\Bigl\{ \mathbbm{1} \{ w \in N_{\delta}: \delta \in \mathbb{Q}_{>0}, w \in G, n \geq 1\} \Bigl\}$ respectively. There exists a constant $M \geq 1$, such that for any $A \in \mathfrak{F}$, $B \in \mathfrak{G}$, we have $\mathbb{P}[A \cap B] \leq M \mathbb{P}[A]\mathbb{P}[B]$.

This is known as the $\psi$-mixing condition \cite{bradley1983psi} and it is common in statistics for proving limit theorems. Smaller $M$ indicates that the dependence of $L_S(w)$ on the training sample $S$ is weaker.

\end{document}